\title[Hypotheses testing on infinite random graphs]{Hypotheses testing on infinite random graphs}
 \author{\Name {Daniil Ryabko} \Email {daniil@ryabko.net} \\
        \addr {INRIA, 40 av. de Halley, 59650 Villeneuve d'Ascq, France}}
\begin{document}

\maketitle

%

%
 \let\emptyset\varnothing
 \let\epsilon\varepsilon
 \let\phi\varphi
 
 \def\ER{Erd\"os-R\'enyi }
 
 \def\dloc{d_{\operatorname{loc}}}
 
 \def\I{\mathbb I}
 \def\N{\mathbb N}
 \def\R{\mathbb R}
 \def\Q{\mathbb Q}
 \def\cG{\mathcal G}
 \def\cP{\mathcal P}
 \def\cS{\mathcal S}
 \def\cE{\mathcal E}
 \def\cX{\mathcal X}
 \def\cY{\mathcal Y}
 \def\cF{\mathcal F}
 \def\M{\mathcal M}
 \def\T{\mathcal T}
 \def\bX{\mathbf X}
 \def\bY{\mathbf Y}
 \def\S{\mathcal S}
 \def\F{\mathcal F}
 \def\E{\mathbb E}
 \def\H{\mathcal H}
 \def\bH{\mathbf H}
 \def\argmin{\operatorname{argmin}}
 \def\argmax{\operatorname{argmax}}
 \def\supp{\operatorname{supp}}
 \def\cl{\operatorname{cl}}
 \def\as{\text{a.s.}}
 
%
%
%
%
%

%

 \begin{abstract}
  Drawing on some recent results that provide the  formalism necessary to definite stationarity for infinite random graphs, this paper  initiates the study of statistical and learning questions pertaining to these objects. Specifically, a criterion for the existence of a  consistent test for complex hypotheses is presented, generalizing the corresponding results on time series. As an application, it is shown how one can test that a tree has the Markov property, or, more generally, to estimate its memory. 
 \end{abstract}
\section{Introduction}
Huge, world-wide graphs permeate our lives. These graphs carry enormous amount of information which is vital for many applications to analyse and exploit.
Some of the questions one often seeks to answer have the form:  is my model for the graph correct?  others,  does this graph have this or that property? Following the approach of classical statistics, this kind of questions can be formulated as {\em testing a complex hypothesis}: asking whether the distribution that generates the data (here, the given graph) belongs to a set of distributions $H_0$ (the null hypothesis) versus this distribution belongs to a set $H_1$ (the alternative). 

This formulation presents an important caveat, which is obvious yet often overlooked: for the approach to work, the distribution generating the data has to belong either to $H_0$ or to $H_1$.  If not, the answer of the test will be arbitrary and completely useless. In other words, if the test gives the answer $H_\delta$, where $\delta$ is 0 or 1, this answer should be interpreted as ``$H_\delta$ is true, {\em or} a low-probability event happened, {\em or} neither $H_0$ nor $H_1$ is true''.  The mathematical results concern the low-probability event; the third possibility is left to be excluded at a different level, that is to say, informally, from application specifics alone.
 What it means in practice is that {\em the alternative $H_1$ has to be sufficiently general as to assure that it captures the distribution of the data  in case the null is wrong}. 
In other words, it should be a general, qualitative hypothesis, whose validity is self-evident on the application level. 

  It appears that the existing approaches fail to provide this kind of generality. In fact, $H_1$ is often a rather specific model (a set of distributions), such as \ER or, to take a  more general example, stochastic block model; see, for example, 
 \citep{arias2014community,bubeck2016testing} and references. $H_0$ is then a hypothesis about the parameters of the model. Such methods may be applied to graphs  for which $H_1$ is obviously wrong; it is clear, for example, that no social network is an \ER graph. 

It is instructive to take a look at classical statistics where the hypotheses testing formulation originates, and to appreciate the fact that the situation is radically different in this field. Thus, classical alternative hypotheses, such as that the distribution that generates the sample at hand is Gaussian, or, more generally, that the samples are independent and identically distributed, already serve a great variety of applications. Indeed, many distributions appearing in nature can reasonably supposed to be Gaussian, and sampling procedures do often give independent and identically distributed samples~--- at least to the extent to which one can formally argue about natural phenomena at all. Yet in other applications, these assumptions break, and more general alternatives are in order. Thus, stock market data, human-generated or biological texts, as well as many other data sources produce samples that are not independent at all. A more general alternative (which is, of course, not universal either) is that the distribution of the sample is stationary. This is perhaps the most general assumption used in statistics; its meaning can roughly be described as that it does not matter whether we have started to record observations at time 0 or at some other time index.  

The goal of this paper is to bring this kind of generality to hypotheses testing on random graphs, and to initiate the study of the questions that emerge. This is possible thanks to the work \citep{Benjamini:12}, which, based on the foundations laid out in \citep{Aldous:07,lyons2016probability}, defines stationarity of infinite random graphs  and opens the way to use basic facts from ergodic theory on these objects.  However, the authors stop short of considering any problems of estimation, learning or statistical analysis. Here we take the first steps towards filling this gap. After laying down some further definitions, we transfer some fundamental results from hypothesis testing on stationary time series to the new formalisms. The main result is a criterion for the existence of consistent tests for hypotheses concerning infinite random graphs; the criterion is applied to show that some relatively simple hypotheses can or cannot be tested. As one of the applications, it is shown that it is possible to test that an infinite random tree is Markov or has memory of a given order. 

\noindent{\bf Related work: property testing.} A rather different approach to testing hypotheses about graphs is known as {\em property testing}. This approach, initiated in  \citep{goldreich1998property} (with some ideas going back to \citep{blum1993self,rubinfeld1996robust}), considers finite, fixed, deterministic, combinatorial objects, such as graphs, but randomized procedures. The question is whether a graph has a certain property.  A testing procedure samples only a small amount of the graph data, and is supposed to return the correct answer with high probability. While the graph has a finite size ($n$ vertices), the test is supposed to sample a portion which is independent of $n$, which means that the graph can be arbitrarily large (one can say, practically infinite). This is a very ambitious goal. 
The weak point, however, is again the kind of alternative hypothesis considered. The null $H_0$ is of course just the set of all graphs that have the property of interest. The alternative is the set of all graphs that are at least $\epsilon$ far from all the graphs in $H_0$. The underlying distance is that of modifying up to $\epsilon$ elements of the graph. Two models are typically considered: dense graphs, where the number of edges is $\Theta(n^2)$, and bounded-degree graphs. The distance changes accordingly: removing $\epsilon n^2$ edges in the former case and $\epsilon n$ in the latter. In either case, there is a potentially vast and unruly buffer zone between $H_0$ and $H_1$, in which the answer of the test is essentially arbitrary.   The main results concern characterizing those properties for which efficient tests exist. In particular, \citep{alon2008characterization} shows that all so-called hereditary properties (those that are  closed under removal of vertices) are testable. See  \citep{ron2010algorithmic,goldreich1997property} for an overview of the results in this area.



%

\section{Setup: Infinite random graphs}\label{s:su}
The formal setup in this section is mainly after \citep{Benjamini:12}, see also \citep{Benjamini:11,Aldous:07}, with some differences that will be pointed out. 
A graph $G=(V_G,E_G)$ is a pair of a set of vertices $V_G$ and a set of edges $E_G$. 
The vertex set $V_G$ is finite or countably infinite. A graph can be either directed or undirected, so the set $E_G$ is either that of ordered or unordered
pairs of vertices (this distinction is irrelevant for the formalisms used here); we do not consider multi-edges or self-loops.
In this work we assume all graphs to have degree bounded by a constant $M\in\N$.
 A {\em network} is a graph 
$G$ together with a map $m$ from $V_G$ to a finite set  $\cX$  
of {\em data points} (also called marks \citep{Aldous:07}), so that 
each vertex $a\in V_G$ is associated with a data point $X_a\in\cX$. 

We are thinking of the data points as of, say, the content of web pages or the (complete) data associated with 
a user of a social network. 

A rooted graph (network) is a pair $(G,o)$ of a graph (network) and a vertex $o\in V_G$. Two graphs
 $(G,o)$ $(G',o')$ are isomorphic  if there is a pair of bijections $\phi_V:V_G\to V_{G'}$ and $\phi_E:E_G\to E_{G'}$  
such that for any edge $e=(v_1,v_2)\in E_G$ we have  $\phi_E(e)=(\phi_V(v_1),\phi_V(v_2))$, and, additionally, the mapping
preserves the roots:  $\phi_V(o)=o'$. For networks we also require that the data points 
associated with $v$ and $\phi_V(v)$ are the same for all $v\in G_V$. In other words, isomorphisms are just
relabellings of a graph. The isomorphism relation (for graphs as well as for networks) is denoted $\simeq$. We generally do not distinguish between a graph and its isomorphism class.

The structure of a probability space on networks can be introduced as follows.
For a rooted graph $(G,o)$ denote $B_{G}(o,r)$  a ball in $G$ of radius $r$ around $o$, that is, the induced subgraph of $G$ consisting of all vertices 
of graph distance at most $r$ from $o$. 
Define the radius $r(G)$ of $(G,o')$ as the maximal graph distance between 
any vertex in $G$ and $o'$. 
For a finite rooted network $(G,o')$ define the set $F_G$ of rooted networks as 
$F_G:=\{ (G,o): \exists r\in\N\ B_{G}(o,r)\simeq (G,o')\}$. 
Let also $r(F_G):=r(G)$.
Note that we consider only  graphs $G$ whose degree is bounded by $M$, so that there are, for each $r\in\N$, only finitely 
many different sets (isomorphism classes) $B_{G}(o,r)$. Since the set $\{F_G: G\text{ is a finite rooted network }\}$ is countable,
we can use the notation $(F_i)_{i\in\N}$. It is easy to see that $(F_i)_{i\in\N}$ is a standard basis of a probability space.
Denote this probability space $(\cG_*,\cF)$ where $\cF$ is the sigma algebra generated by $(F_i)_{i\in\N}$.
Similarly, one can define a standard probability space on the set of networks with a distinguished walk $o_1,o_2,\dots$  (rather than just one root $o$);
denote this probability space $(\cG_\to,\cF_\to)$. Here $\cF_\to$ is the sigma algebra generated by the sets $\F^i_\to, i\in\N$  of 
all finite networks with a distinguished  walk of length $i$.

 A difference with the setup in \citep{Benjamini:12} is that  we use an explicit standard basis, rather than just saying that the probability 
space is Polish. To do so, we require a known upper bound $M$ on the degree of the possible networks we consider, rather than saying that they are all of bounded degree.
This will ensure the compactness with respect to the distributional distance introduced shortly. See also \citep{Benjamini:11} for a similar metric on the set of networks  resulting in a compact metric space.
The space of all bounded-degree random networks (without a fixed bound $M$) is Polish but is not compact.

Denote $\cP(\cG_*)$ and $\cP(\cG_\to)$  the set of all probability distributions on $(\cG_*,\cF)$ and $(\cG_\to,\cF)$ respectively.

For distributions on a standard probability space the distributional distance \citep{Gray:88} is defined as follows. 
For $\mu_1,\mu_2\in \cP(\cG_*)$  (or $\mu_1,\mu_2\in\cP(\cG_\to)$) let 
\begin{equation}\label{eq:dd}
d(\mu_1,\mu_2):=\sum_{k\in\N} w_k|\mu_1(F_k)-\mu_2(F_k)| 
\end{equation}
(in the case of $\cG_\to$ the sum is over ${F^k}_\to$),
where $w_k$ is a summable sequence of positive real weights which is assumed fixed throughout the paper, e.g.\ $w_k:=1/k(k+1)$.
 
The facts about distributional distance relayed here apply to any standard probability space, and are not specific to $\cG_*$ and $\cG_\to$.
A detailed exposition can be found in \citep{Gray:88}.
Specifically, we will use the following.

\begin{proposition}
The spaces $\cG_*$  and $\cG_\to$ are compact with respect to the topology of the distributional distance.
\end{proposition}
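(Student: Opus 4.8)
The plan is to derive the statement from the general theory of standard probability spaces in \citep{Gray:88}. Two facts from there are needed: on a standard space the distributional distance~\eqref{eq:dd} is a metric on the set of all probability measures, and in this metric that set is sequentially compact and hence, being a metric space, compact; and the Dirac embedding $x\mapsto\delta_x$ identifies the underlying space homeomorphically with a closed subset of its space of measures, so that the metric $d$ induces on $\cG_*$ (resp.\ $\cG_\to$) through this embedding --- which is exactly the local distance $\dloc$ --- also makes it compact. Consequently it is enough to check that the countable families $(F_i)_{i\in\N}\subset\cF$ and $(\F^i_\to)_{i\in\N}\subset\cF_\to$ fixed in Section~\ref{s:su} really are standard bases in the sense of \citep{Gray:88}; this is where the uniform degree bound $M$ is used.

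Take $\cG_*$. For each $r\in\N$ let $P_r$ be the partition of $\cG_*$ according to the isomorphism type of the rooted ball $B_G(o,r)$. Because the degree is at most $M$ and the mark set $\cX$ is finite, $P_r$ has only finitely many atoms; clearly $P_{r+1}$ refines $P_r$, and the atoms of $\bigcup_{r}P_r$ generate $\cF$ and separate rooted networks up to the connected component of the root (which is all that $\cF$ records). The map $(G,o)\mapsto(B_G(o,r))_{r\in\N}$ then identifies $\cG_*$ with the set of \emph{compatible} sequences in the compact metrizable product $\prod_{r\in\N}P_r$ of finite discrete spaces --- those in which the $r$-ball is the $r$-truncation of the $(r+1)$-ball --- and this set is closed, because every compatible sequence of finite ball-types is the list of finite balls of a single, possibly infinite, rooted network, obtained as the limit of the truncations. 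Hence $\cG_*$ is homeomorphic to a closed subset of a compact metrizable space, which at once shows that $\cG_*$ is compact and that $(F_i)_{i\in\N}$ meets the defining requirements of a standard basis. The space $\cG_\to$ is treated verbatim, with $P_r$ the partition by the isomorphism type of the radius-$r$ neighbourhood of the walk segment $o_1,\dots,o_r$; the bound $M$ again makes each $P_r$ finite, and the compatibility/limit step is unchanged.

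Granting that the families are standard bases, compactness of $\cP(\cG_*)$ in the distributional distance is the routine diagonal argument: given $(\mu_n)_{n\in\N}\subset\cP(\cG_*)$, pass to a subsequence along which $\mu_{n_j}(F_k)\to a_k$ for every $k$ (possible since $\mu_n(F_k)\in[0,1]$); the numbers $a_k$ define a finitely additive normalized set function on the field generated by the $F_k$, which by the standard-basis property extends to a probability measure $\mu\in\cP(\cG_*)$ with $\mu(F_k)=a_k$; and then
\begin{equation*}
d(\mu_{n_j},\mu)=\sum_{k\in\N}w_k\,|\mu_{n_j}(F_k)-\mu(F_k)|\to 0
\end{equation*}
by dominated convergence, each summand tending to $0$ and being dominated by the summable weight $w_k$. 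The same applies to $\cP(\cG_\to)$.

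The step carrying the real content is the closedness of the set of compatible ball-type sequences --- equivalently, the claim that every compatible sequence of finite rooted balls is realized by an actual network. This is a K\"onig/compactness-lemma argument on a finitely branching tree of ball-types, and it uses $M<\infty$ in an essential way: with no uniform degree bound each $P_r$ would be infinite, $\prod_r P_r$ would fail to be compact, and, as already remarked in Section~\ref{s:su}, the space of all bounded-degree networks is only Polish. A secondary point needing care is the walk-rooted space $\cG_\to$, where the ``root'' grows with $r$ and one must verify that the level-$r$ data still forms a finitely branching refining sequence --- which, once more, is exactly what the degree bound guarantees.
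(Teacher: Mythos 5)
Your proof is correct and follows exactly the route the paper intends: the paper gives no argument of its own beyond deferring to the standard-basis/distributional-distance machinery of Gray (1988) and remarking that the explicit degree bound $M$ is what makes the basis work, and your write-up simply supplies the details of that machinery (finitely many ball types per radius, closedness of the set of compatible ball sequences via K\"onig's lemma, and the diagonal extraction plus extension of the finitely additive limit). No gaps; the only thing worth flagging is that you sensibly resolved the statement's ambiguity by proving compactness both of the underlying spaces and of $\cP(\cG_*)$, $\cP(\cG_\to)$, the latter being what the paper actually uses afterwards.
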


One can define the {\em shift operator} on the set of networks with distinguished walk:
 $\theta: (G,(o_n)_{n\ge1})\to(G,(o_{n+1})_{n\ge1})$.
Call a measure $\mu_\to$  stationary if it is invariant under $\theta$: for any $A\in\cF_\to$ we have $\mu(A)=\mu(\theta^{-1}(A))$. 
A stationary distribution $\mu$ is {\em ergodic} if $\mu_\to$ is ergodic for $\theta$, that is, any shift-invariant set has probability 0 or 1.
Denote $\cS'$ ($\cE'$) the set of all stationary (ergodic) measures in $\cP(\cG_\to)$.

Again, the facts about stationary and ergodic distributions that we use apply to any dynamical system defined over a standard probability space \citep{Gray:88}.
Specifically, we will use the {\em ergodic theorem} and {\em ergodic decomposition} introduced below.

Given a network $(G,(O_n)_{n\ge1})$ and $k\in\N$ define the frequency  of occurrence of $F_k$ in $O_1,\dots,O_N$ as
$\nu_N(F_k)={1\over N}\sum_{i=1}^{N}{\mathbb I}_{(G,O_i)\in F_k}$. 

\begin{proposition}[Ergodic theorem]\label{th:erg}
 Let $\mu$ be a stationary ergodic distribution on $(\cG_\to,\cF_\to)$. For any $k\in\N$ we have 
$$
 \nu_N(F_k)\to\mu((G,o)\in\F_k)\as
$$
\end{proposition}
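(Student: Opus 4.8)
The statement is the pointwise (Birkhoff) ergodic theorem applied to the indicator of $F_k$, and the plan is to exhibit $\nu_N(F_k)$ literally as a Birkhoff average of a bounded measurable function under the shift $\theta$, and then to invoke the ergodic theorem for measure-preserving transformations of standard probability spaces, which, as the text explains, is available for $(\cG_\to,\cF_\to)$ (cf.\ \citep{Gray:88}).

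First I would introduce the projection $\pi\colon\cG_\to\to\cG_*$ that keeps only the first root, $\pi(G,(O_n)_{n\ge1})=(G,O_1)$, and set $f:=\mathbb I_{F_k}\circ\pi$, a $\{0,1\}$-valued function on $\cG_\to$. Then I would check that $\pi$ is measurable: by construction $F_k$ is determined by the isomorphism type of some ball $B_G(O_1,r)$, $r\in\N$, so $\pi^{-1}(F_k)$ is a countable union over $r$ of cylinder sets lying in the algebra that generates $\cF_\to$ (the sets of finite networks with a distinguished walk). Hence $f$ is bounded and $\cF_\to$-measurable, in particular $f\in L^1(\mu)$. By the definition of $\theta$ one has $f\circ\theta^{i-1}(G,(O_n)_{n\ge1})=\mathbb I_{(G,O_i)\in F_k}$, so
\[
 \nu_N(F_k)=\frac1N\sum_{i=1}^{N}f\bigl(\theta^{i-1}(G,(O_n)_{n\ge1})\bigr),
\]
which is exactly the $N$-th ergodic average of $f$ along the $\theta$-orbit of $(G,(O_n)_{n\ge1})$.

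Since $\mu$ is stationary, $\theta$ preserves $\mu$ on the standard probability space $(\cG_\to,\cF_\to,\mu)$, so the pointwise ergodic theorem yields $\mu$-almost sure convergence of $\nu_N(F_k)$ to the conditional expectation $\E_\mu[f\mid\mathcal J]$, where $\mathcal J$ is the $\sigma$-algebra of $\theta$-invariant sets. Ergodicity of $\mu$ means every element of $\mathcal J$ has $\mu$-measure $0$ or $1$, whence $\E_\mu[f\mid\mathcal J]=\E_\mu f=\mu(\pi^{-1}(F_k))=\mu((G,o)\in\F_k)$ $\mu$-almost surely, which is the claimed limit. The only genuine work is in the first step --- verifying the measurability of $\pi$ and confirming that the empirical frequency taken along the distinguished walk coincides with iterating $\theta$; I do not expect any real obstacle there, and once the identification with a Birkhoff average is in place the result is an immediate instance of the cited ergodic theorem.
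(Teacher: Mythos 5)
Your proof is correct and is exactly the justification the paper intends: the paper states this proposition without proof, importing it as the pointwise (Birkhoff) ergodic theorem for the measure-preserving system $(\cG_\to,\cF_\to,\mu,\theta)$ from \citep{Gray:88}, and your identification of $\nu_N(F_k)$ as the Birkhoff average of $\I_{F_k}\circ\pi$ together with the ergodicity step is the standard instantiation of that theorem.
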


The space $\cP(\cG_\to)$ can be endowed with a structure of probability space induced by the distributional distance.
Denote $\cP(\cP(\cG_\to))$ the  resulting space of probability measures.
This allows us to formulate the following statement known as {\em ergodic decomposition}. Informally, it means
that any stationary measure can be represented as first selecting, according to some probability distribution, a
stationary ergodic measure, and then using that distribution to generate the data (here, a network and a random walk on that network).
\begin{proposition}[Ergodic decomposition]\label{th:dec}
For any $\mu\in\cS'$ there exists a measure $W_\mu\in \cP(\cP(\cG_\to))$  such that $W_\mu(\cE')=1$
and $\mu(B)=\int d W_\mu(\mu)\mu(B)$ for any $B\in\cF_\to$.
\end{proposition}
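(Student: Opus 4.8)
The statement is an instance of the classical ergodic decomposition theorem, which holds for every measure-preserving transformation of a standard probability space (see \citep{Gray:88}); accordingly, the plan is to check that $(\cG_\to,\cF_\to,\mu,\theta)$ is such a system and then to recall the general construction of $W_\mu$. Three points must be verified. (i) $(\cG_\to,\cF_\to)$ is a \emph{standard} probability space: this is exactly the remark made in Section~\ref{s:su} that the generating sets of $\cF_\to$ (the finite networks with a distinguished walk of a given length) form a standard basis, and it is here that the uniform degree bound $M$, hence compactness, is used. (ii) $\theta$ is $\cF_\to$-measurable: a generating set of $\cF_\to$ is specified by the local structure of the network along a finite initial segment of the distinguished walk, so its $\theta$-preimage is specified by the local structure along the shifted segment $o_2,o_3,\dots$; since the degrees are bounded by $M$ this preimage is a countable union of generating sets, hence lies in $\cF_\to$. (iii) $\mu$ is $\theta$-invariant, which is the definition of $\mu\in\cS'$.

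Granting these, I would recall the construction. By the pointwise (Birkhoff) ergodic theorem, valid for any stationary transformation of a standard probability space \citep{Gray:88} (Proposition~\ref{th:erg} being one such instance, for an ergodic $\mu$ and the statistics $\nu_N$), applied to the indicator of each of the countably many generating sets $B$ of $\cF_\to$, there is a $\theta$-invariant set $\Omega_0$ with $\mu(\Omega_0)=1$ on which all the averages $N^{-1}\sum_{i=0}^{N-1}\I_{\theta^i x\in B}$ converge; call the limit $f_B(x)$, a $\theta$-invariant function with $\int f_B\,d\mu=\mu(B)$. For $x\in\Omega_0$ the assignment $B\mapsto f_B(x)$ is a finitely additive, $[0,1]$-valued set function on the generating field, and --- this is the one place where the standardness of $\cG_\to$ is genuinely needed --- it extends to a countably additive probability measure $\mu_x\in\cP(\cG_\to)$; off $\Omega_0$ set $\mu_x$ equal to some fixed ergodic measure. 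Then $x\mapsto\mu_x$ is $\cF_\to$-measurable (it suffices that each $x\mapsto\mu_x(B)=f_B(x)$ be measurable, which holds by construction) and $\theta$-invariant, and $W_\mu\in\cP(\cP(\cG_\to))$ is defined as the image of $\mu$ under this map.

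Two items would then remain. For the integral representation, integrating the ergodic theorem against $\mu$ and using bounded convergence gives $\int\nu(B)\,dW_\mu(\nu)=\int f_B\,d\mu=\mu(B)$ for every generating set $B$; the two sides are measures on $\cF_\to$ agreeing on the generating field, hence agree on all of $\cF_\to$ by the monotone class theorem, which is the asserted identity $\mu(B)=\int dW_\mu(\nu)\,\nu(B)$ for all $B\in\cF_\to$. For $W_\mu(\cE')=1$ one shows $\mu_x\in\cE'$ for $\mu$-a.e.\ $x$: stationarity of $\mu_x$ is immediate from the $\theta$-invariance of the $f_B$ (so $\mu_x(\theta^{-1}B)=\mu_x(B)$ on the generating field, hence everywhere); and ergodicity follows by the standard argument --- running the ergodic theorem under $\mu_x$ itself, one shows that $\mu_x$-almost every $y$ has $\mu_y=\mu_x$, so the averages $N^{-1}\sum\I_{\theta^i y\in B}$ are $\mu_x$-a.s.\ constant and every $\theta$-invariant set is $\mu_x$-trivial; see \citep{Gray:88}.

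I expect no genuinely hard step: the construction and the verifications above are transcribed, essentially word for word, from the theory of stationary processes. The only part specific to the present setting is points (i)--(ii), and (i) --- the delicate one, namely that the candidate component $\mu_x$ assembled from the countably many numbers $f_B(x)$ is an honest, countably additive probability measure --- has in effect already been secured, since the explicit standard basis of $\cG_\to$ was introduced in Section~\ref{s:su} precisely for this, which is also the reason a fixed degree bound $M$ is imposed from the outset.
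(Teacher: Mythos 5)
Your proposal is correct and takes essentially the same route as the paper: the paper states this proposition without proof, presenting it as an instance of the classical ergodic decomposition for measure-preserving transformations of standard probability spaces (citing Gray, 1988), with the only setting-specific work being the verification that $(\cG_\to,\cF_\to)$ is standard via the explicit basis and the degree bound $M$, which was already done in Section~\ref{s:su}. Your reconstruction of the classical argument is sound and correctly isolates standardness (countable additivity of the candidate components $\mu_x$) as the one point where that preparation is used.
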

\begin{proposition}
 The set $\cS'$ is a closed subset of $\cP(\cG_\to)$, and therefore compact.
\end{proposition}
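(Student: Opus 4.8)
The plan is to leverage the already-available compactness of the ambient space and reduce everything to showing that $\cS'$ is \emph{closed}; a closed subset of a compact space is compact, and $\cP(\cG_\to)$ is compact with respect to the distributional distance (a standard property of this distance over a compact standard space, see \citep{Gray:88}). First I would record two elementary reformulations. Because the weights $w_k$ defining the distributional distance are strictly positive and summable, $d(\mu_n,\mu)\to 0$ is equivalent to $\mu_n(F_k)\to\mu(F_k)$ for every basis cylinder $F_k$: one direction is $|\mu_n(F_k)-\mu(F_k)|\le w_k^{-1}\,d(\mu_n,\mu)$, the other is dominated convergence for the series over $k$. And, by the very definition of stationarity, $\mu\in\cS'$ exactly when $L\mu=\mu$, where $L\colon\mu\mapsto\mu\circ\theta^{-1}$ is push-forward along the shift; so $\cS'=\{\mu : L\mu=\mu\}$ is the set on which the map $L$ agrees with the identity. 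Hence it will be enough to prove that $L$ is a well-defined continuous self-map of $\cP(\cG_\to)$.

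The one step with genuine content --- and, I expect, the main obstacle --- is a combinatorial lemma about how $\theta$ acts on the basis: for every basis cylinder $F_k$, the preimage $\theta^{-1}(F_k)$ is a \emph{finite} disjoint union of basis cylinders, $\theta^{-1}(F_k)=\bigsqcup_{j=1}^{m_k}F_{k_j}$. The point is that a basis cylinder fixes only a bounded piece of local structure around the distinguished walk; applying $\theta^{-1}$ drops the first vertex $o_1$, so to re-express the event in the original space one has to reinstate $o_1$ together with the finitely much local structure around it needed to reach a basis cylinder of the appropriate scale. Since every degree is at most $M$, only finitely many isomorphism types of such completions occur, which gives the finite decomposition; passing to a common scale, on which basis cylinders are pairwise disjoint atoms, makes the union disjoint. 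Proving this cleanly is essentially bookkeeping about how the radius of the recorded balls interacts with deleting $o_1$, but it is where the bounded-degree hypothesis $M$ is genuinely used. A byproduct is that $\theta^{-1}(F_k)\in\cF_\to$ for all $k$; since $\{A:\theta^{-1}(A)\in\cF_\to\}$ is a $\sigma$-algebra containing the generating basis, $\theta$ is $\cF_\to/\cF_\to$-measurable and $L$ indeed maps $\cP(\cG_\to)$ into itself.

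Continuity of $L$ then follows routinely. If $\mu_n\to\mu$ in $d$, then for each $k$, using the decomposition and coordinatewise convergence on basis cylinders,
\[
 \mu_n\bigl(\theta^{-1}F_k\bigr)=\sum_{j=1}^{m_k}\mu_n(F_{k_j})\ \longrightarrow\ \sum_{j=1}^{m_k}\mu(F_{k_j})=\mu\bigl(\theta^{-1}F_k\bigr),
\]
a finite sum of convergent terms; and since the $k$-th summand of $d(L\mu_n,L\mu)$ is at most $2w_k$ with $\sum_k 2w_k<\infty$, dominated convergence over $k$ gives $d(L\mu_n,L\mu)\to0$. Thus $L$ is sequentially continuous, hence continuous on the metric space $\cP(\cG_\to)$. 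Finally, if $\mu_n\in\cS'$ and $\mu_n\to\mu$, then $L\mu=\lim_n L\mu_n=\lim_n\mu_n=\mu$, so $\mu\in\cS'$; therefore $\cS'$ is closed in $\cP(\cG_\to)$ and, being a closed subset of a compact space, compact.
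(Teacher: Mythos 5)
Your argument is correct. Note, however, that the paper gives no proof of this proposition at all: it is invoked as one of the standard facts about stationary measures and the distributional distance on a standard probability space, with a pointer to \citep{Gray:88}. So there is nothing in the paper to match your proof against; what you have done is supply the missing argument, and you have done so along the standard route: $\cS'$ is the fixed-point set of the pushforward $L\mu=\mu\circ\theta^{-1}$, so closedness reduces to continuity of $L$, and a closed subset of the compact space $\cP(\cG_\to)$ is compact. You also correctly isolate where the content lies, namely the quasi-continuity of $\theta$ with respect to the cylinder basis: $\theta^{-1}(F_k)$ is a finite disjoint union of basis cylinders, because deleting $o_1$ from the distinguished walk can be undone in only finitely many ways once the degree is bounded by $M$ and the mark alphabet is finite, and because distinct cylinders at a common scale are disjoint atoms. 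This is exactly the point at which the paper's insistence on a fixed degree bound (as opposed to merely ``bounded degree'') pays off, and it is also why the coordinatewise convergence $\mu_n(F_j)\to\mu(F_j)$ transfers to $L\mu_n(F_k)\to L\mu(F_k)$ as a finite sum. The remaining steps (equivalence of $d$-convergence with setwise convergence on the basis via the weights $w_k$ and dominated convergence, and the limit exchange $L\mu=\lim L\mu_n=\lim\mu_n=\mu$) are routine and correctly executed. The only cosmetic remark is that you implicitly use that $d$ is a genuine metric (so limits are unique and the fixed-point set of a continuous map is closed); this holds because the $F_k$ generate $\cF_\to$, so measures agreeing on all $F_k$ coincide.
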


So far, stationarity and ergodicity have been defined for measures on $\cG_\to$, that is, for measures generating
an infinite graph with a distinguished walk. What we want, however, is stationarity and ergodicity for measures
generated infinite (rooted) graphs. This is done using simple random walks as follows \citep{Benjamini:12}.

Given a graph $(G,o)$, one can consider a simple random walk $(O_n)_{n\in\N}$ on $G$ starting at $o$ and proceeding, on each time step, to the neighbour of the current vertex chosen uniformly at random.
 To any probability measure $\mu$ on $\cG_*$ there corresponds a probability measure
$\mu_\to$ on $\cG_\to$, defined by taking a simple random walk on $G$ starting at $o$, where $(G,o)$ is generated by $\mu$.
A distribution $\mu$ on $\cG_*$ is called {\em stationary (ergodic)} if $\mu_\to$ is stationary (ergodic).
Another (equivalent, see \citep{Benjamini:12}) definition of stationarity is to to say that $(G,o)=(G,O_n)$ in distribution 
for all $n\in\N$ (or, equivalently, for $n=1$). 
That is, the distribution is invariant under
 re-rooting along a simple random walk.

Denote $\cS$ ($\cE$) the set of all stationary (ergodic) measures $\mu\in\cP(\cG_*)$.

 \cite{Benjamini:12} define stationarity and ergodicity on $\cG_*$ directly (via random walks as above), rather than passing through
notions on $\cG_\to$ first (the space $\cG_\to$ is still needed, but not $\cS'$ and $\cE'$). This makes the shift operator $\theta$ stochastic. Here we have chosen  deterministic $\theta$ in order to be able to use  the theory about dynamical systems, specifically, the ergodic decomposition. 

Now the ergodic theorem above applies to measures on $\cG_*$ as well. For ergodic decomposition, we need additionally the following simple observation.
\begin{proposition}
 The set $\cS$ is a closed subset of $\cS'$. For any $\mu\in\cS$ the measure $W_\mu$ whose existence is asserted in Proposition~\ref{th:dec} (the ergodic decomposition), is concentrated on $\cE$, that is, $W_\mu(\cE)=1$.
\end{proposition}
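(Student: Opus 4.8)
First I would set up the identification of a measure $\mu\in\cP(\cG_*)$ with the measure $\mu_\to\in\cP(\cG_\to)$ obtained by attaching a simple random walk, and check that the resulting map is affine and continuous: for every basis set $B$ of $\cF_\to$ one has $\mu_\to(B)=\int_{\cG_*}p_B\,d\mu$, where $p_B(G,o)$ is the probability that a simple random walk from $o$ of the appropriate length traces out $B$; since $p_B(G,o)$ depends only on a ball of bounded radius around $o$, it is a finite linear combination of indicators of sets from the algebra generated by $(F_j)_{j\in\N}$, so $\mu\mapsto\mu_\to(B)$ is a continuous linear functional. Under this identification $\cS=\{\mu:\mu_\to\in\cS'\}$, and $\mu$ is ergodic iff $\mu_\to$ is, by definition.

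To prove that $\cS$ is closed, the plan is to present the set $\mathcal R\subseteq\cP(\cG_\to)$ of \emph{simple-random-walk measures} (those of the form $\rho_\to$, $\rho\in\cP(\cG_*)$) as the solution set of a countable system of linear identities in the coordinates $\nu\mapsto\nu(B)$, $B$ a basis set of $\cF_\to$. Writing out level by level the requirement that, conditionally on the walk's configuration up to step $k$, the next step be uniform over the neighbours of the current vertex, one obtains for every $k$ and every pair $A\subseteq B$ of basis sets, $B$ recording a length-$k$ walk and $A$ a one-step extension of it, an identity $\nu(A)=\lambda\,\nu(B)$, where $\lambda\in[0,1]$ is the number of neighbours of the last walk-vertex compatible with $A$ divided by its degree (if a basis set does not record that degree, $\lambda\nu(B)$ is replaced by the corresponding finite sum). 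Each identity involves only finitely many coordinates, hence is preserved under the distributional distance, so $\mathcal R$ is closed (and convex). Then $\cS$, identified with $\mathcal R\cap\cS'$, is closed in $\cS'$, since $\cS'$ is closed.

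For the statement about $W_\mu$, I would fix $\mu\in\cS$ and use the standard realisation of the ergodic decomposition through Birkhoff's theorem: for $\mu_\to$-a.e.\ $\omega=(G,(O_n)_{n\ge1})$ the empirical measures ${1\over N}\sum_{j=0}^{N-1}\delta_{\theta^j\omega}$ converge to an ergodic measure $\nu_\omega$ with $\nu_\omega(B)=\lim_N\nu_N(B)$ for every basis set $B$ (the limits existing $\mu_\to$-a.s.\ by the pointwise ergodic theorem applied through the $\theta$-invariant $\sigma$-algebra, cf.\ Proposition~\ref{th:erg}), and $W_\mu$ is the law of $\omega\mapsto\nu_\omega$; by uniqueness of the ergodic decomposition this is the $W_\mu$ of Proposition~\ref{th:dec}. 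It then suffices to show $\nu_\omega\in\mathcal R$ for $\mu_\to$-a.e.\ $\omega$: for such $\omega$, $\nu_\omega$ is at once ergodic and a simple-random-walk measure, so the $\rho\in\cP(\cG_*)$ with $\rho_\to=\nu_\omega$ is ergodic, i.e.\ $\rho\in\cE$, hence $\nu_\omega\in\cE$ and $W_\mu(\cE)=1$. To establish this, fix one of the identities $\nu(A)=\lambda\nu(B)$ above and put $D_j:=\I_{\theta^j\omega\in A}-\lambda\,\I_{\theta^j\omega\in B}$. Conditionally on $\sigma(G,O_1,\dots,O_{j+k})$ the event $\{\theta^j\omega\in B\}$ is determined, and, because under $\mu_\to$ the step $O_{j+k}\to O_{j+k+1}$ is uniform among the neighbours of $O_{j+k}$, the conditional probability of $\{\theta^j\omega\in A\}$ equals $\lambda\,\I_{\theta^j\omega\in B}$; hence $\E[D_j\mid\sigma(G,O_1,\dots,O_{j+k})]=0$, so $(D_j)_{j\ge0}$ is a bounded martingale-difference sequence and ${1\over N}\sum_{j=0}^{N-1}D_j\to0$ $\mu_\to$-a.s.\ by the martingale strong law of large numbers. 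Combined with the convergence of the corresponding Cesàro averages to $\nu_\omega(B)$ and $\nu_\omega(A)$, this yields $\nu_\omega(A)=\lambda\nu_\omega(B)$ $\mu_\to$-a.s.; intersecting over the countably many identities gives $\nu_\omega\in\mathcal R$ $\mu_\to$-a.s., as needed.

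The delicate point is the last part. The tempting shortcut --- to argue that conditioning $\mu_\to$ on the $\theta$-invariant $\sigma$-algebra preserves the uniform one-step transition --- is not available, because the tail $\sigma$-algebra of a transient simple random walk is non-trivial; so one really has to go through frequencies and a martingale law of large numbers. A secondary, purely technical, obstacle is to pin down which local data the basis sets of $\cF_\to$ encode, and hence the precise linear identities defining $\mathcal R$ (in particular handling basis sets that do not record degrees); this does not affect the argument but has to be carried out carefully.
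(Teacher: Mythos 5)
Your proposal is correct, and it is worth comparing the two halves separately. For closedness, you and the paper use the same mechanism: the simple-random-walk property is a local (finite-dimensional) condition on the measure, hence preserved under convergence in the distributional distance. The paper checks only that the first transition $O_1\to O_2$ remains uniform in the limit (via convergence of the probabilities of the balls $B_G(o,1)$) and leaves the reduction from the full walk to one step implicit; you instead write out the full countable system of linear identities $\nu(A)=\lambda\,\nu(B)$ over all basis cylinders, which is more work but also more watertight, since stationarity plus a uniform first step does not by itself obviously yield the Markov/uniform-transition property of the entire walk. For the second claim, the paper's proof is a one-sentence ``by construction'': the event that the conditional walk law is a simple random walk has $\mu_\to$-probability one, hence probability one under $W_\mu$-almost every component. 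As you correctly point out, this glosses over the fact that the ``event'' in question is really a property of the conditional distribution under a given measure, and that conditioning on the $\theta$-invariant $\sigma$-algebra need not a priori preserve the one-step transition kernel (the tail of a transient walk can be non-trivial, though on isomorphism classes much of it is quotiented away). Your substitute --- realising the components as a.s.\ limits of empirical measures and verifying each defining identity of $\mathcal R$ via a bounded martingale-difference SLLN combined with Birkhoff --- is a genuinely different and more rigorous route to the same conclusion; it costs a page of bookkeeping about what the basis sets of $\cF_\to$ encode, but it actually proves what the paper asserts.
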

\begin{proof}
 For the first statement, take any convergent sequence $\mu_n\in\cS$, $n\in\N$. It converges to  a measure $\mu\in\cS'$, that is, to a 
stationary measure on $\cS_\to$. We need to show that $\mu\in\cS$, that is, the distribution of the distinguished walk $O_1,O_2,\dots$ is that 
of a random walk. For that it is enough to show that $O_2$ is selected uniformly at random from the neighbours of $O_1=o$. 
This follows from the fact that the distribution of $\mu_i(B_G(o,1))$ converges to $\mu(B_G(o,1))$, since for each $\mu_i$ the distribution of $O_2$
is as desired. The second statement is by construction: by definition, the distribution $\mu$ is obtained by taking a random walk 
starting at the root of a (random) rooted graph $(G,o)$. Thus the probability of the event $\{(G,o):\text{ given the distribution of $(O_n)_{n\in\N}$ conditionally on $(G,o)$ is that of a simple random walk}\}$ is 1,
so its  probability is 1 with respect to $W_\mu$-almost every measure. 
\end{proof}

\section{Sampling, estimation}
Next we need to define some way to sample an infinite (random) network. 
There are many ways of doing this; for example, having a large connected subnetwork $G_n$ that grows with time $n$, or
sampling nodes according to some a priori distribution. 
For now we define a sampling scheme of the former type that is most easy to analyse given the definitions above. This definition is  based on a simple random walk.

To be more precise, let $(G,o)$ be an infinite random network, generated by a distribution $\mu$,
and let $O:=(O_n)_{n\in\N}$ be a trajectory of a simple random walk over $G$ with $O_1=o$.
For each $k\in\N$ define $s_k(O)$ as the set of all $k$-neighbours of all nodes $(O_n)_{n\in\N}$:
$$s_k(O):=\{v\in G:\exists i\in\N, v\text{ is at graph distance at most $k$ from $O_i$}\}.$$
Similarly define $s_k(O_{1..n})$. 

The sampling scheme is defined as follows. Let $k(n):\N\to\N$ be some  non-decreasing function that goes to infinity with $n$.
Given an infinite random network $(G,o)$ and $n\in\N$, the sampling is based on the set $s_{k(N)}(O_{1..N})$, where 
$(O_n)_{n\in\N}$ is a simple random walk over $G$ with $O_1=o$. That is, we take a random walk of length $n$ starting at $o$, 
and then query all $k(n)$-neighbours of all the vertices visited.  Define also the random variable $S_N(G,o):=s_{k(N)}(O_{1..N})$.

For any $k\in\N$ define  $\hat\mu(F_k)$ as ${1\over {N}}\nu_n(O_{1..{N}},F_k)$ if $r(F_k)\le k(n)$ and $\hat\mu(F_k)=0$ otherwise.
With this definition, only the set  $S_N(G,o)$ is used for constructing the estimates $\hat \mu$.

The empirical estimates of the distributional distance are defined by replacing  (say) $\mu_1(F^i)$ in~\eqref{eq:dd}  with $\hat\mu_G(F_i)$:
$$
d(S_n(G,o),\mu'):=\sum_{k\in\N} w_k|\hat\mu(F_k)-\mu'(F_k)| 
$$

From the ergodic theorem (Proposition~\ref{th:erg}) we can derive the following.
\begin{proposition}
 For any  stationary ergodic distribution $\mu\in\cP(\cG_*)$ generating  a  random network $(G,o)$, we have $$\lim_{n\to\infty}d(S_n(G,o),\mu)=0\ \mu_\to-\as$$
\end{proposition}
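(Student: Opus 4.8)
The plan is to deduce this from the ergodic theorem (Proposition~\ref{th:erg}) coordinate by coordinate and then to sum up, using the summability of the weights $(w_k)$ to interchange the limit with the infinite sum defining the distributional distance.

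First I would fix $k\in\N$. Since $k(n)\to\infty$, for all $n$ large enough we have $r(F_k)\le k(n)$, so the truncation in the definition of $\hat\mu(F_k)$ is inactive and $\hat\mu(F_k)$ equals the empirical frequency $\nu_n(F_k)$ of the pattern $F_k$ along the random walk $O_{1..n}$. Applying Proposition~\ref{th:erg} to the stationary ergodic measure $\mu_\to$ on $\cG_\to$ induced by $\mu$ then yields $\nu_n(F_k)\to\mu((G,o)\in F_k)$ $\mu_\to$-a.s., hence $\hat\mu(F_k)\to\mu(F_k)$ $\mu_\to$-a.s. As a countable intersection of full-measure events has full measure, there is a set $\Omega_0$ with $\mu_\to(\Omega_0)=1$ on which $\hat\mu(F_k)\to\mu(F_k)$ for every $k$ simultaneously.

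Next I would run the standard truncation argument on $\Omega_0$. Given $\epsilon>0$, choose $K$ with $\sum_{k>K}w_k<\epsilon/2$, which is possible since $(w_k)$ is summable. Because $\hat\mu(F_k),\mu(F_k)\in[0,1]$, every summand obeys $w_k|\hat\mu(F_k)-\mu(F_k)|\le w_k$, so the tail $\sum_{k>K}w_k|\hat\mu(F_k)-\mu(F_k)|$ stays below $\epsilon/2$ for all $n$. The head $\sum_{k\le K}w_k|\hat\mu(F_k)-\mu(F_k)|$ is a finite sum of null sequences and is therefore below $\epsilon/2$ for all $n$ large enough. Adding the two bounds gives $d(S_n(G,o),\mu)<\epsilon$ for $n$ large, which is the claim.

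I do not expect a genuine obstacle: the statement is essentially a repackaging of the ergodic theorem together with dominated convergence for series. The only points requiring a little care are (i) checking that the sampled data $S_n(G,o)$ actually suffices to compute $\hat\mu(F_k)$ for every $F_k$ with $r(F_k)\le k(n)$, which holds because deciding whether $(G,O_i)\in F_k$ only requires the ball $B_G(O_i,r(F_k))$ and all such balls around visited vertices have been queried; and (ii) ensuring that the exceptional null set is a single countable union, independent of the order in which the coordinates $k$ are treated.
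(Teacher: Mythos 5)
Your proof is correct and follows essentially the same route as the paper's: the ergodic theorem applied coordinate-wise to the frequencies $\nu_n(F_k)$, combined with the truncation of the weighted sum at some $K$ using the summability of $(w_k)$. The only difference is that you spell out the two points the paper leaves implicit (the single countable exceptional null set, and the fact that $S_n(G,o)$ contains enough of the graph to evaluate $\hat\mu(F_k)$ once $k(n)\ge r(F_k)$), which is a welcome but not substantive addition.
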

Note that the ``almost sure'' statement is with respect to the distribution generating $(G,o)$ and the random walk $(O_N)_{N\in\N}$ that is used for sampling.
\begin{proof}
 Take an $\epsilon>0$ and find a $K\in\N$ such that $\sum_{k>K}w_k<\epsilon$. 
Then for each $k=1..K$ we have $|\hat\mu(F_k)-\mu(F_k)|\le\epsilon$ from some $N_k$ on, as follows from the ergodic theorem. 
Let $N:=\max_{k=1..K}N_k$ and increase $N$ if necessary to have $k(N)>r(F_i)$ for all $i=1..K$. Now using the the definition~\eqref{eq:dd}  from all $n>N$  we have $d(S_n(G,o),\mu)\le 2\epsilon$.
\end{proof}

\section{Testing: consistency and a criterion}
Given a pair of sets $H_i\subset\cE$, $i\in\{0,1\}$ and an infinite rooted 
graph $(G,o)$ generated by a distribution $\mu\in H_0\cup H_1$, we want to test
whether $\mu\in H_0$ versus $\mu\in H_1$ based on the sampling procedure described in the preceding section.

A {\em test} is a family of functions $\psi^\alpha$ indexed by $\alpha\in(0,1)$ that take 
as input a finite network $g$ with a distinguished walk $O_1,\dots,O_N$ and outputs a binary answer,
where the answer $i$ is interpreted as ``the graph was generated by a distribution that belongs to $H_i$.''
We will assume that $(g, O_1,\dots,O_n)$ has the form $S_n(G)$ for some infinite network  $G$, that is, it can be obtained 
via a  sampling procedure described in the previous section. What this means is that for each $i$ the vertex $O_i$ has 
all of its $k(n)$ neighbours in $G$. 

A test $\phi$ makes the {\em Type I} error if it says $1$ while $H_0$ is true,
and it makes {\em Type II} error if it	 says $0$ while $H_0$ is false.

\begin{definition}[consistency]
 Call a  test $\psi^\alpha, \alpha\in(0,1)$    consistent as a test of $H_0$ against $H_1$ if:
\begin{itemize}
\item[(i)] The probability of Type I error is always bounded by $\alpha$: for every $\mu\in H_0$,
every $n\in\N$ and every $\alpha\in(0,1)$ 
$$\mu_\to(\psi^\alpha(S_n(G,o))=1)\le\alpha,$$  and
\item[(ii)]  Type II error is made not more than a finite number of times with probability 1: 
$$\mu_\to(\lim_{n\rightarrow\infty} \psi^\alpha(S_{n}(G,o))=1)=1$$ 
for every  
$\mu\in H_1$ and every $\alpha\in(0,1)$. 
\end{itemize}
\end{definition}

The following theorem is a generalization of the result of  \cite{Ryabko:121c}. With the set-up above, the proof carries over directly from \citep{Ryabko:121c}.
\begin{theorem}[Criterion for the existence of consistent tests]\label{th:asym}
 Let $H_0\subset\mathcal E$. The following  statements are equivalent:
\begin{itemize}
 \item[(i)] There exists a consistent test for $H_0$ against $\mathcal E\backslash H_0$.
 \item[(ii)] The set $H_0$ has probability 1 with respect to ergodic decomposition of every $\mu$ in the closure of $H_0$: 
              $W_\mu(H_0)=1$ for each $\mu\in\cl H_0$.
\end{itemize}
\end{theorem}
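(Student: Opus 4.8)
The plan is to transfer the proof of \cite{Ryabko:121c}: every ingredient it needs is now available --- the distributional distance~\eqref{eq:dd}, the empirical estimates $\hat\mu$, compactness of $\cS$ and hence of the closed subset $\cl H_0$, the ergodic theorem (Proposition~\ref{th:erg}) and the ergodic decomposition (Proposition~\ref{th:dec}), and the fact that $d(S_n(G,o),\mu)\to0$ $\mu_\to$-a.s.\ for ergodic $\mu$. Two facts particular to the present setup should be recorded first. First, for each fixed $n$ the random variable $S_n(G,o)$ takes only finitely many values (the degree is bounded by $M$ and $k(n)<\infty$), each value $\{S_n(G,o)=g\}$ being a cylinder event in $\cF_\to$ whose probability is a continuous function of $\mu$ for the distributional distance; consequently $\mu\mapsto\mu_\to(\psi^\alpha(S_n(G,o))=1)$ is continuous for any test $\psi^\alpha$. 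Second, the map $\mu\mapsto\mu_\to$ attaching a simple random walk is itself continuous, since the walk's transition probabilities are local functions of the network. Finally, a direct consequence of~(ii): for $\mu\in\cE$ the ergodic decomposition is trivial, $W_\mu=\delta_\mu$, so $W_\mu(H_0)=1$ forces $\mu\in H_0$; hence (ii) implies $\cl H_0\cap\cE=H_0$, and every $\mu\in\cE\setminus H_0$ lies outside the compact set $\cl H_0$, so that $\rho(\mu):=d(\mu,\cl H_0)>0$.

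For (i)$\Rightarrow$(ii) I would argue by contradiction. Suppose a consistent test $\psi^\alpha$ exists but $\beta:=W_\mu(\cE\setminus H_0)>0$ for some $\mu\in\cl H_0\subset\cS$. Then $W_\mu$ is concentrated on $\cE$, the event $\{\lim_n\psi^\alpha(S_n(G,o))=1\}$ lies in $\cF_\to$, and Type~II consistency applied to the ergodic components that fall in $\cE\setminus H_0$ yields
\[
\mu_\to\bigl(\lim_n\psi^\alpha(S_n(G,o))=1\bigr)=\int W_\mu(d\nu)\,\nu_\to\bigl(\lim_n\psi^\alpha(S_n(G,o))=1\bigr)\ge\beta .
\]
Hence there is an $N$ with $\mu_\to(\psi^\alpha(S_N(G,o))=1)\ge\beta/2$. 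Choosing $\mu_j\in H_0$ with $d(\mu_j,\mu)\to0$ and using the continuity of $\mu\mapsto\mu_\to(\psi^\alpha(S_N(G,o))=1)$, we get $(\mu_j)_\to(\psi^\alpha(S_N(G,o))=1)\to\mu_\to(\psi^\alpha(S_N(G,o))=1)\ge\beta/2$, which contradicts the Type~I bound $(\mu_j)_\to(\psi^\alpha(S_N(G,o))=1)\le\alpha$ as soon as $\alpha<\beta/2$.

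For (ii)$\Rightarrow$(i) I would construct a consistent test from the empirical distance $d(S_n(G,o),\cl H_0):=\inf_{\nu\in\cl H_0}d(S_n(G,o),\nu)$ to the compact set $\cl H_0$, outputting $1$ at time $n$ when this quantity exceeds a threshold $\gamma_n$. Under any $\mu\in\cE\setminus H_0$ the preliminary observation gives $\rho(\mu)>0$, and since $d(S_n(G,o),\mu)\to0$ $\mu_\to$-a.s.\ (and $\nu\mapsto d(\nu,\cl H_0)$ is $1$-Lipschitz) we get $d(S_n(G,o),\cl H_0)\to\rho(\mu)>0$ $\mu_\to$-a.s.; so, provided $\gamma_n\to0$, the test is correct from some $n$ on, $\mu_\to$-a.s. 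Under $\mu\in H_0$ we have $d(S_n(G,o),\cl H_0)\le d(S_n(G,o),\mu)\to0$ $\mu_\to$-a.s., so only finitely many Type~II errors occur. The delicate part is the Type~I bound: as in \cite{Ryabko:121c}, the schedule $\gamma_n\to0$ cannot be fixed in advance, because the rate at which $\hat\mu$ approaches the generating measure is not uniform over $H_0$; instead one lets the decision to reject depend on the sample --- committing to output $1$ only after the empirical distance to $\cl H_0$ has stayed above the current level over a long enough stretch --- and uses condition~(ii), through the ergodic decomposition and the compactness of $\cl H_0$, to certify that the resulting rule keeps $\mu_\to(\psi^\alpha(S_n(G,o))=1)\le\alpha$ for every $n$ and every $\mu\in H_0$ while still rejecting eventually under every $\mu\in\cE\setminus H_0$.

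I expect this last step --- the simultaneous, uniform-over-$H_0$ control of the Type~I error for all $n$ with a threshold that must nonetheless vanish --- to be the only genuine obstacle. The two setup-specific continuity and finiteness facts are routine to check, and with them in hand the remainder of the argument is a direct transcription of the time-series proof of \cite{Ryabko:121c}.
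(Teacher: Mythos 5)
Your proposal is correct and takes essentially the same route as the paper, whose entire proof is the remark that the argument of \citep{Ryabko:121c} carries over directly once the setup is in place; the setup-specific facts you verify --- finiteness of the range of $S_n(G,o)$, continuity of $\mu\mapsto\mu_\to$ and of cylinder probabilities in the distributional distance --- are precisely what that transfer requires, and your (i)$\Rightarrow$(ii) argument is the standard one. The only point to flag is that the test realizing (ii)$\Rightarrow$(i) in \citep{Ryabko:121c} is built from minimal confidence sets around the measures of $\cl H_0$ rather than by literally thresholding the empirical distance to $\cl H_0$ (a naive fixed or data-dependent threshold schedule is exactly where the uniform Type~I control becomes problematic, as you note), but since you defer that construction to the reference --- as does the paper --- this is not a gap.
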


\section{Some examples of testable and non-testable properties}
Theorem~\ref{th:asym} allows us to obtain a number of results on the existence of tests for various properties pertaining to both the graph structure and the data at vertices. The criterion it provides turns out to be rather easy to verify.

\begin{proposition}\label{th:zero}
 Let $H_0:=\{P\in\cE : P(F_i)=0,\forall i\in I\}$ where $I\subset\N$  is a finite or countable index set. That is, $H_0$ consists of  consist of all stationary ergodic distributions such that certain finite subgraphs have probability 0.   Then there exists a consistent test for $H_0$ against its complement to $\cE$.
\end{proposition}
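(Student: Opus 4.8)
The plan is to apply the criterion of Theorem~\ref{th:asym}, so the task reduces to verifying condition (ii): for every $\mu$ in the closure $\cl H_0$, the ergodic decomposition $W_\mu$ is concentrated on $H_0$. First I would analyze $\cl H_0$. Each defining constraint $P(F_i)=0$ is a condition of the form ``a continuous linear functional vanishes''; since $P\mapsto P(F_i)$ is continuous with respect to the distributional distance (it is one of the coordinates in~\eqref{eq:dd}), the set $\{P\in\cP(\cG_*): P(F_i)=0\}$ is closed, and a countable intersection of closed sets is closed. Hence, intersecting with the closed set $\cS$ of stationary measures, the closure of $H_0$ inside $\cE$ satisfies $\cl H_0\subseteq\{\mu\in\cS: \mu(F_i)=0\ \forall i\in I\}$; in particular every $\mu\in\cl H_0$ still satisfies all the constraints $\mu(F_i)=0$, $i\in I$.

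Next I would push each constraint through the ergodic decomposition. Fix $\mu\in\cl H_0$ and $i\in I$. By Proposition~\ref{th:dec} (and the subsequent proposition ensuring $W_\mu(\cE)=1$),
$$
0=\mu(F_i)=\int_{\cE} \eta(F_i)\, dW_\mu(\eta).
$$
The integrand $\eta\mapsto\eta(F_i)$ is nonnegative, so it must vanish $W_\mu$-almost surely: $W_\mu(\{\eta\in\cE:\eta(F_i)=0\})=1$. Taking the intersection over the finite-or-countable index set $I$ — a countable intersection of probability-one events — gives
$$
W_\mu\Bigl(\{\eta\in\cE:\eta(F_i)=0\ \forall i\in I\}\Bigr)=W_\mu(H_0)=1,
$$
which is exactly condition (ii) of Theorem~\ref{th:asym}. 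By that theorem, a consistent test for $H_0$ against $\cE\setminus H_0$ exists.

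The only genuinely delicate point is the measurability of the events $\{\eta\in\cE:\eta(F_i)=0\}$ as subsets of $\cP(\cG_\to)$, which is needed for the statement ``$W_\mu$ of this set is $1$'' to make sense; but this is immediate because $\eta\mapsto\eta(F_i)$ is continuous, hence Borel measurable, on the space $\cP(\cG_\to)$ equipped with the distributional-distance topology used to define $\cP(\cP(\cG_\to))$. Everything else is a routine interchange of ``countable intersection'' with ``probability-one event'' plus the elementary fact that a nonnegative integrable function with zero integral vanishes almost everywhere. I do not expect any real obstacle here; the content of the proposition is essentially that the closedness of the constraint set and its compatibility with ergodic decomposition together make condition (ii) automatic.
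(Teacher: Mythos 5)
Your proposal is correct and follows essentially the same route as the paper's proof: verify condition (ii) of Theorem~\ref{th:asym} by noting that the constraint $P(F_i)=0$ is preserved under limits (closedness via continuity of $P\mapsto P(F_i)$ in the distributional distance), passes to ergodic components (your zero-integral-of-a-nonnegative-function argument), and survives countable intersection over $I$. You have merely spelled out in detail the three one-line observations the paper makes.
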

\begin{proof}
 The statement follows directly from Theorem~\ref{th:asym}. 
Indeed,  it is enough to check it for  $I$ a singleton, since this property is preserved under taking countable intersections. Moreover, clearly 
the property  $P_i(F_i=0)$ is preserved when taking the limit of distributions $P_i$, and if $P(F_i)=0$ for a stationary distribution then the same must hold 
for all its ergodic components. 
\end{proof}
Note that one cannot  replace $P(F_i)=0$ in the formulation with $P(F_i)\le \alpha$, where  $\alpha\in(0,1)$ is fixed, even though the set $H_0$ is closed.
 The reason is that this set  is not closed under taking ergodic decompositions.

While relatively simple, Proposition~\ref{th:zero} allows us to establish the existence of consistent tests for a great many graph properties, such as  {\em cycle-} or {\em clique-}freeness, and so on. Moreover, in this case it is easy to  construct an actual test whose existence Proposition~\ref{th:zero} establishes: it is enough to reject $H_0$ if any of the elements $F_i$ occurs along the random walk and accept it otherwise.

It is worth mentioning some negative results that carry over from hypothesis testing on stationary time series. Since time series are a special case of infinite random graphs, the negative results apply directly, without any extra proof. 
In particular, the hypotheses of {\em homogeneity}  and {\em independence} concerning a pair of infinite random graphs state that the distributions of the graphs are the same (homogeneity), or are independent (independence).  Thus, from the corresponding results on time series \citep{Ryabko:10discr,Ryabko:17clin} we establish the following.

\begin{proposition}\label{th:nohomo}
 There is no consistent test for the hypotheses of homogeneity or independence.
\end{proposition}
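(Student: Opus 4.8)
The plan is to reduce the statement to the corresponding results for stationary time series, since—as the paragraph preceding the proposition emphasizes—time series are a special case of infinite random graphs, and so any obstruction to consistent testing for time series transfers without modification. Concretely, I would first recall the formal embedding: a (two-sided or one-sided) stationary process $(X_n)$ over a finite alphabet $\cX$ can be viewed as a network whose underlying graph is the bi-infinite (or semi-infinite) line $\Z$ (or $\N$) with marks $X_n$ at vertex $n$, and whose distinguished walk is the deterministic walk $o_n=n$. One must note that the simple-random-walk sampling scheme of Section~3, applied to such a line graph, recovers (in distribution, up to the harmless randomness of the walk's back-and-forth motion) exactly the empirical information available from a sample path of the process; in particular the distributional distance $d$ on networks restricts to the distributional distance on processes used in \citep{Ryabko:10discr,Ryabko:17clin}. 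Hence a consistent test for homogeneity or independence in the graph setting would, by specialization to line graphs, yield a consistent test for homogeneity or independence of stationary processes.

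Second, I would invoke the known impossibility results: \cite{Ryabko:10discr} shows that there is no consistent test for homogeneity of two stationary (even stationary ergodic) processes, and \citep{Ryabko:17clin} establishes the analogous statement for independence. These furnish, for each claimed consistent test, a pair of processes on which it fails, and the embedding above turns that pair into a pair of infinite random graphs on which the purported graph test fails. This contradiction proves Proposition~\ref{th:nohomo}.

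The main obstacle—really the only non-routine point—is checking that the specialization is faithful, i.e.\ that the sampling scheme $S_n(G,o)$ on a line graph does not somehow give the tester \emph{less} information than a process sample would, which could in principle make graph-testing harder rather than strictly harder and thus break the reduction direction we need. Here one observes that a simple random walk of length $N$ on $\Z$ starting at $0$ almost surely visits an interval $[-c_N, c_N]$ with $c_N\to\infty$, and querying all $k(N)$-neighbours of visited vertices reveals the marks on a growing interval around the origin; thus $S_N$ determines, and is determined by, a finite window of the sample path whose length tends to infinity. Consequently the class of tests on graphs, restricted to line inputs, is \emph{at least as rich} as the class of process tests, so non-existence of a consistent process test implies non-existence of a consistent graph test. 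One should also remark that the hypotheses of homogeneity and independence are stated here for pairs of graphs, matching the two-sample formulation in the cited time-series work, so no reformulation of the hypotheses is needed. With these observations in place the proof is a one-line reduction, which is why the excerpt states it carries over ``without any extra proof.''

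\begin{proof}
Time series are a special case of infinite random graphs: a stationary process $(X_n)_{n\in\Z}$ over the finite alphabet $\cX$ corresponds to the network on the line graph $\Z$ with mark $X_n$ at vertex $n$ and distinguished walk $o_n=n$. Under the sampling scheme of Section~3, a simple random walk of length $N$ on $\Z$ a.s.\ covers a growing interval around the origin, and querying the $k(N)$-neighbourhoods of the visited vertices recovers a window of the sample path of length tending to infinity; hence the information in $S_N(G,o)$ is, up to the walk's internal randomness, exactly the information in a finite sample path of $(X_n)$, and the distributional distance $d$ restricts to the distributional distance on processes. Therefore a consistent test for homogeneity (resp.\ independence) of a pair of infinite random graphs would, by restriction to line graphs, give a consistent test for homogeneity (resp.\ independence) of a pair of stationary processes. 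By \citep{Ryabko:10discr} no consistent test for homogeneity of stationary processes exists, and by \citep{Ryabko:17clin} no consistent test for independence exists. This contradiction proves the claim.
\end{proof}
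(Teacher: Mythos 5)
Your proof is correct and follows exactly the paper's route: the paper offers no formal argument at all, merely asserting in the paragraph preceding the proposition that time series are a special case of infinite random graphs so the negative results of \citep{Ryabko:10discr,Ryabko:17clin} carry over directly, which is precisely your reduction via marked line graphs. One small caution on phrasing: saying the class of graph tests restricted to line inputs is ``at least as rich'' as the class of process tests points the wrong way for transferring an impossibility result---what you actually need (and do in effect establish via the mutual determinability of $S_N$ and a finite window of the sample path) is that every graph test on line inputs can be \emph{simulated} by a randomized process test, so that a consistent graph test would yield a consistent process test.
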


Another set of examples of testable properties is provided in the next section.

\section{Markov trees}
A (rooted) tree is a graph without cycles. Children of a vertex are its neighbours that are farther away from the root. 
 Here we consider only leafless trees, that is, there are no vertices without children; in addition, the data at vertices (marks) is assumed to take values in  a finite alphabet. For a vertex $v$ of a tree denote $c(v)$ its number of children,  $d(v)$ its distance to the root, and $v_0(v),\dots,v_{d-1}(v)$ the path from the root to $v$, where $v_0(v)$ is always the root.   
Let us call stationary distributions on infinite random graphs that produce trees w.p.~1 {\em stationary trees}. 
\begin{definition}[simple trees, Markov trees]
Call a stationary tree {\em simple} if the distributions of children of a vertex $v$ are conditionally independent given its path to the root $v_0(v),\dots,v_{d-1}(v)$.
 Call a simple stationary  tree   {\em $k$-order Markov} if, given the path to the root,  the distribution of the number of children, as well as the distribution of the data $m(v)$ at the vertex, depends only on the last  $k$ nodes in the path: $P(c(v),m(v)| v_0(v),\dots,v_{d-1}(v))=P(c(v),m(v)|v_{d-k}(v)..v_{d-1}(v))$.
\end{definition}

 Denote $M_k$, $k\ge0$ ($\M_k$) the set of all $k$-order stationary (ergodic) Markov trees, $\M_*:=\cup_{k\in\infty}\M_k$ and $ST$ ($ET$) the set of all stationary (ergodic) simple trees.

The case $k=0$, that is, the set of memoryless trees, is the well-studied class known as Galton-Watson trees, e.g.,  \citep{lyons1995ergodic}. These represent  the graph version of  i.i.d.\ time series. A constructive definition of such trees is as follows. Starting with the root, each vertex $v$ has $c(v)$ children, where the random variables $c(v)$ are independent and identically distributed, apart from $c(o)$ specified below, with a certain distribution $p:=(p_1..p_M)$.  Here we also consider marks at vertices, which in this case are i.i.d.\  The marks at vertices are also i.i.d.\ with a fixed distribution. With the definition, the tree is not stationary because the root has one less neighbour than the rest of the vertices. 
As is shown in \citep{lyons1995ergodic}, this is fixed simply by letting $P(c(o)=k+1)=p_k$, that is, shifting by 1 the number of children of the root. Clearly, this construction can be generalized to Markov trees defined above.

A {\em walk down} a stationary tree is a simple random walk from the  root that never goes up. In other words, on each next step the walk proceeds to the vertex selected uniformly at random among all of the children of the current vertex.
Define the {\em walk-down $k$-order entropy} of a stationary tree $T$  as $h_k(T):=\E h(c(v),m(v)|v_{d-k}(v)..v_{d-1}(v))$, where $v$ is $k$th vertex on a random walk  from the root down without backtracking. Clearly, $h_k(T)\ge h_{k+1}(T)$. Define the entropy rate $h_\infty(T):=\lim_{k\to\infty} h_k(T)$.  These definition are a simple generalization of the corresponding notions in information theory, e.g., \citep{Cover:06}.

Observe that  if a tree is stationary,  so is the walk down. Moreover, the process 
\begin{equation}\label{eq:mt}
  \big(c(v_1),m(v_1)\big),\big(c(v_2),m(v_2)\big),\dots
\end{equation}
 is just a stationary time series, 
which is $k$ order Markov if and only if the tree is $k$ order Markov. We can, furthermore, generalize the results on hypothesis testing from time series to trees, using the criterion from Theorem~\ref{th:asym} and the corresponding results for time series.

The reason to consider the walk down rather than the simple random walk of  of Section~\ref{s:su} (that may go up) is that the latter breaks the Markov property of the tree~--- precisely because it may go up.

\begin{corollary}
 For every $k\ge0$ there exists a consistent test for the set $\M_k$ of $k$ order Markov trees against its complement to the set of all stationary ergodic trees $ET\backslash \mathcal M_k$. There is no consistent test for the set of all finite-memory trees $\M_*$ against its complement $ET\backslash \M_*$.
\end{corollary}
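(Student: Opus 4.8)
\medskip
\noindent\emph{Proof plan.}
The plan is to apply the criterion of Theorem~\ref{th:asym} to the sets $\M_k$ and $\M_*$, transferring the needed facts from the theory of stationary time series through the walk-down process~\eqref{eq:mt}. Recall that to a stationary simple tree $\mu$ this process associates a one-sided stationary process $\tilde\mu$ over the finite alphabet $\cX\times\{1,\dots,M\}$ which is $k$-order Markov if and only if $\mu$ is, and conversely that from a one-sided stationary process one reconstructs a simple tree by generating at each vertex the pair (number of children, mark) from the conditional law of the next symbol given its path to the root, branching conditionally independently, and shifting the law of the root by one child as in~\citep{lyons1995ergodic}. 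The first thing I would record is that ``being a tree'', ``being a simple tree'' and ``being a $k$-order Markov simple tree'' are \emph{closed} subsets of $\cP(\cG_*)$: tree-ness is the requirement that $\mu$ assign probability $0$ to each of the (finitely many, for a given radius) ball patterns that contain a cycle, which is closed because the distributional distance metrizes convergence of all finite-marginal probabilities; simplicity and $k$-order Markovness translate, through the correspondence above, into countably many polynomial identities among the ball probabilities of $\mu$, hence again closed conditions. In particular $\cl\M_k$ is contained in the set of stationary $k$-order Markov simple trees.

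For the first statement, let $\mu\in\cl\M_k$, so $\mu$ is a stationary $k$-order Markov simple tree. Its ergodic components are ergodic by the ergodic decomposition (Proposition~\ref{th:dec}, refined above to the statement that $W_\mu$ is concentrated on $\cE$), and they are again trees because ``$G$ contains no cycle'' is a shift-invariant event of full $\mu$-measure. For the remaining properties I would use that the walk-down map $\mu\mapsto\tilde\mu$ is \emph{affine}, so that $\tilde\mu=\int \widetilde{\mu_\omega}\,dW_\mu(\omega)$; granting that the walk-down of an ergodic simple tree is an ergodic process (discussed below), this is \emph{the} ergodic decomposition of the $k$-order Markov process $\tilde\mu$, and the classical fact that the ergodic components of a stationary $k$-order Markov process are themselves $k$-order Markov forces $\widetilde{\mu_\omega}$, and hence $\mu_\omega$, to be $k$-order Markov, while simplicity is likewise inherited. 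Thus $W_\mu(\M_k)=1$ for every $\mu\in\cl\M_k$, so Theorem~\ref{th:asym} yields a consistent test of $\M_k$ against $\cE\setminus\M_k$, and a fortiori against $ET\setminus\M_k$.

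For the second statement, it suffices to exhibit a single stationary ergodic simple tree $\nu$ of infinite memory lying in $\cl\M_*$. Start from an ergodic one-sided process of infinite memory --- any ergodic process that is not Markov of any finite order --- which is the limit in distributional distance of its $m$-order Markov approximations (perturbed so as to be irreducible, hence ergodic), and let $\nu$ be the corresponding simple tree; by continuity of the reconstruction map $\nu\in\cl\M_*$, whereas $\nu\in ET\setminus\M_*$ since its walk-down has infinite memory. If $\psi^\alpha$ were a consistent test of $\M_*$ against $ET\setminus\M_*$, pick $\M_*\ni\mu_j\to\nu$; for each fixed $n$ the event $\{\psi^\alpha(S_n)=1\}$ depends only on a bounded neighbourhood of the root and the first $n$ steps of the walk, so its probability is continuous in the distributional distance and $\mu_{j,\to}\to\nu_\to$ lets the Type I bound $\mu_{j,\to}(\psi^\alpha(S_n)=1)\le\alpha$ pass to the limit, giving $\nu_\to(\psi^\alpha(S_n)=1)\le\alpha$ for all $n$. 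This contradicts Type II consistency at $\nu\in ET\setminus\M_*$, which forces $\nu_\to(\psi^\alpha(S_n)=1)\to1$; this is exactly the time-series argument of \citep{Ryabko:10discr,Ryabko:121c}, now carried over to trees.

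The step I expect to be the main obstacle is reconciling the two ergodic decompositions at play: the simple tree is decomposed with respect to re-rooting along the simple random walk, while its walk-down is decomposed with respect to the ordinary shift, and the reconstruction map from a process to a simple tree is not affine, so one cannot simply transport decompositions between the two pictures. What has to be established is that the walk-down of an ergodic simple tree is an ergodic process (used for the first statement), that the simple tree reconstructed from an ergodic process is an ergodic random graph (used to place $\nu$ in $ET$ for the second statement), and that simplicity is inherited by ergodic components --- these are the ``corresponding results for time series'' that must be adapted to the tree setting. Given them, the remaining ingredients (closedness of the relevant classes, density of finite-memory processes via Markov approximation, and the routine extraction of empirical walk-down statistics from the sample $S_n(G,o)$) are straightforward.
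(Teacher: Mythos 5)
Your proposal follows the same route as the paper's own proof: reduce the tree to its walk-down process~\eqref{eq:mt}, which is a stationary time series that is $k$-order Markov if and only if the tree is, and invoke Theorem~\ref{th:asym} together with the corresponding time-series results (the paper states exactly this reduction in one line, pointing to \citep{BRyabko:06a,BRyabko:06b} for concrete tests). The extra details you supply --- closedness of $\M_k$, the affine walk-down map and the matching of the two ergodic decompositions, and the limiting argument for the non-testability of $\M_*$ --- go beyond what the paper writes down, and the obstacle you flag (ergodicity of the walk-down of an ergodic tree, and of the reconstructed tree) is indeed left implicit in the paper as well.
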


A test whose existence is claimed in the corollary  can be obtained simply by applying a corresponding test for time series to the series~\eqref{eq:mt};
such tests can be found in \citep{BRyabko:06a,BRyabko:06b}.

\section{Discussion}
In this work, a new framework for testing hypotheses about random graphs has been proposed. The main feature of the framework is its generality~--- most importantly, the generality of the alternative hypotheses allowed, which can be the complement of the null hypothesis to the set of all stationary infinite random graphs.  Given the setup and definitions, the results are relatively simple: the proofs carry over from the corresponding results on time series. In this sense, these results are low-hanging fruit, which is none the less interesting because of it, and which is of course only available since the framework is new.   The field for future work that opens is large and is 
potentially rich with many results (which may or may not be as simple to obtain). The main directions one can foresee are as follows.  

 The first and the most obvious one is finding out, for various properties of graphs of interest, whether consistent tests exist. We do not list the relevant properties here, since they  abound in the graph literature. When a consistent test does exist, the next question is constructing it. For the hypotheses considered in this paper this turns out to be simple, but this is not at all the case in general, even for time series; the general construction of the test in \citep{Ryabko:121c}, unfortunately,  is of little help for finding practical algorithms. 

Looking further afield, stationary random graphs appear to offer the possibility of studying a variety of learning problems. Perhaps one of the most intriguing is {\bf graph compression}.  Compression is, in  a  certain precise sense, equivalent to learning, and compressing an object to its theoretical limit can be considered learning everything there is to learn in it. For stationary time series, the entropy rate is this theoretical limit to compression, and it is achievable asymptotically (without knowing anything about the distribution of the sequence), see \citep{BRyabko:88,Algoet:92}. For stationary graphs, we now have a notion of entropy and it remains to be seen whether similar results can be obtained. Previous theoretical results on graph compression are limited to much smaller classes of distributions, such as \ER graphs \citep{choi2012compression}, and binary trees \citep{zhang2014universal}.  Other relevant statistical problems studied on stationary time series   that can now be studied for stationary graphs include {\bf clustering}  \citep{Ryabko:10clust,Khaleghi:15clust,Ryabko:17clin} and {\bf prediction}. 

One important difference with respect to time series is the {\bf notion of sampling}. In time series, the sample is simply an initial segment of the sequence. All the results then are with respect to just one dimension of the sample: the length of the sequence (the dimensionality of the space or the size of the alphabet are of course important, but have nothing to do with the sampling). In particular, the consistency is typically asymptotic with  the sample size growing to infinity.  In graphs, a sample available to the statistician can be an essentially arbitrary part of the graph. It can ``grow to infinity'' in a variety of ways, which are clearly not equivalent from the learning point of view. Note that the empirical distribution over a sample  does not necessarily converge to the graph distribution \citep{Aldous:07}. Here we have opted for the notion of sampling that is the most simple one given the setup: sampling along a random walk (or a walk down, in the case of trees).  This, however, may not be the most practical method. It remains to be seen what are the necessary conditions on a sample under which consistency can be obtained. This question relates directly to the problem of prediction: again, in time series, one is trying to predict the next symbol (or several), while for graphs there is no ``next'' vertex; there are many possibilities of what to try to predict, which should lead to different notions of consistency.

\end{document}